\providecommand{\keywords}[1]{\textbf{Key words:} #1}
\newcommand{\soft}{\mathrm{Softmax}}
\newcommand{\ReLU}{\mathrm{ReLU}}
\newcommand{\dt}{\Delta t}
\newcommand{\dx}{\Delta x}
\newcommand{\dy}{\Delta y}
\newcommand{\xxi}{\bm\xi}
\newcommand{\mylangle}{{\Big\langle}}
\newcommand{\myrangle}{{\Big\rangle}}
\newtheorem{theorem}{Theorem}[section]
\begin{document} 
\title{A Mathematical Explanation of Transformers}
\date{}
\author{
	Xue-Cheng Tai\thanks{NORCE Norwegian Research Centre, Nyg\r{a}rdstangen, NO-5838 Bergen, Norway Email: xtai@norceresearch.no, xuechengtai@gmail.com.}, 
	Hao Liu\thanks{Department of Mathematics, Hong Kong Baptist University, Kowloon Tong, Hong Kong. Email: haoliu@hkbu.edu.hk.},
	Lingfeng Li\thanks{Hetao Institue of Mathematics and Interdisciplinary Sciences (Shenzhen). Email: lilingfeng@himis-sz.cn.},
	Raymond H. Chan\thanks{School of Data Science and Department of Operations and Risk Management, Lingnan University, Tuen Mun, Hong Kong. Email: raymond.chan@ln.edu.hk.}
}
\maketitle

    \begin{abstract}
The Transformer architecture has revolutionized the field of sequence modeling and underpins the recent breakthroughs in large language models (LLMs). However, a comprehensive mathematical theory that explains its structure and operations remains elusive. In this work, we propose a novel continuous framework that rigorously interprets the Transformer as a discretization of a structured integro-differential equation. Within this formulation, the self-attention mechanism emerges naturally as a non-local integral operator, and layer normalization is characterized as a projection to a time-dependent constraint. This operator-theoretic and variational perspective offers a unified and interpretable foundation for understanding the architecture's core components, including attention, feedforward layers, and normalization. Our approach extends beyond previous theoretical analyses by embedding the entire Transformer operation in continuous domains for both token indices and feature dimensions. This leads to a principled and flexible framework that not only deepens on theoretical insight but also offers new directions for architecture design, analysis, and control-based interpretations. This new interpretation provides a step toward bridging the gap between deep learning architectures and continuous mathematical modeling, and contributes a foundational perspective to the ongoing development of interpretable and theoretically grounded neural network models.     
\end{abstract}

\keywords{Transformer, attention, operator splitting, integro-differential equation}

\section{Introduction}

Deep neural networks (DNNs) have revolutionized numerous fields, including natural language processing \cite{graves2013speech,young2018recent}, computer vision \cite{krizhevsky2012imagenet}, scientific computing \cite{lu2021learning,raissi2019physics,li2020fourier}, and medical diagnostics \cite{miotto2018deep,jiang2017artificial}. Among these architectures, Transformers \cite{vaswani2017attention} have recently emerged as particularly powerful tools, underpinning remarkable successes in large language models (LLMs) such as GPT-3 and GPT-4 \cite{vaswani2017attention}. Besides language processing \cite{zhu2025Transformers}, the Transformer and its variants have also been applied in applications, including image processing \cite{dosovitskiyimage,strudel2021segmenter,quan2024pseudo}, graph processing \cite{yun2019graph}, operator learning \cite{cheng2024mamba,bryutkin2024hamlet,liu2024prose,ye2024pdeformer}. 

Recently, a series of works have been conducted to study the theoretical foundation or interpretability of Transformers. Approximation and generalization error of Transformers were studied in \cite{furuya2024Transformers,takakura2023approximation,havrilla2024understanding}. It was shown in \cite{havrilla2024understanding,shen2025Transformers} that  Transformers are adaptive to data's low-dimensional structures. For interpretability, in \cite{lai2024attention}, it was shown that Transformers are cubic or high-order splines. It was shown in \cite{jelassi2022vision} that the vision Transformer can learn spatial structures of the dataset. The Transformer was interpreted as an ODE solver for multi-particle dynamical systems in \cite{dutta2021redesigning,geshkovski2023mathematical,lu2020understanding}. The importance of layer normalizaiton in Transformer was studied in \cite{kan2025stability}. A general introduction to Transformers with mathematical descriptions can be found in \cite{turner2023introduction}.

In this work, we will develop a mathematical  theoretical framework to explain the Transformer architecture  of \cite{vaswani2017attention}, which has achieved remarkable success in large language models (LLMs) and other sequence modeling tasks. We show that the Transformer can be interpreted as a discretization of the following continuous integro-differential equation:
\begin{align}
	\begin{cases}
		u_t=\underbrace{\mylangle \gamma(\xb,\cdot,t;u),V(\cdot,\yb,t;u)\myrangle_{\Omega_x} }_{\mbox{I: attention} } + \underbrace{\partial I_{S_{1}(\sigma_1(t),\sigma_2(t))}(u)}_{\mbox{II: layer normalization}} \\
		\hspace{3cm}+ \underbrace{\sum_{j=1}^{J}\left(\mylangle W_j(\cdot,\yb,t)u(\xb,\cdot,t)\myrangle_{\Omega_y}+ b_j(\xb,t)\right)
			+ \partial I_{S_2}(u)}_{\mbox{III: fully connected network}} \mbox{ for } t\in(0,T],\\
		u(\xb,\yb,0)=f(\xb,\yb),
	\end{cases}
	\label{eq.control-intr}
\end{align}
for $(\xb,\yb)\in\Omega_x\times\Omega_y$, where $\xb$ denotes the token index, $\yb$ the entry of token vectors, $\Omega_x$ and $\Omega_y$ are continuous domains, and $T>0$ is a fixed terminal time.
This formulation provides a rigorous mathematical interpretation of the Transformer and its key components, such as self-attention and feedforward layers. The definitions of attention, layer normalization, and the fully connected layer, along with detailed explanations of the domains $\Omega_x$, $\Omega_y$, and the control variables, are given in Section~\ref{sec.model}.
Equation~\eqref{eq.control-intr} highlights the non-local nature of attention and reveals the Transformer as a structured operator derived from a discretized variational principle.
We note that several previous works have also established connections between Transformer-based models and multi-particle dynamical systems \cite{dutta2021redesigning,geshkovski2023mathematical,lu2020understanding}. These studies interpret the original or modified Transformer models as discretizations of interacting particle systems in a time-continuous setting, treating interactions across spatial variables as particle interactions. In contrast, our formulation \eqref{eq.control-intr} is fundamentally different: we show that the Transformer arises as a discretization of a continuous integro-differential equation, offering a more unified operator-theoretic and variational perspective.

This perspective offers several advantages. First, it provides a unifying mathematical framework that connects diverse architectures, such as CNNs, UNets, and Transformers, under the common lens of differential and integral equations. This unification improves our understanding of the design principles underlying modern deep networks and provides insights for cross-architectural studies.

Second, by casting neural network architectures as time-stepping schemes of dynamical systems, our framework allows the systematic exploration of new architectures using well-established tools from numerical analysis. For instance, stability, convergence, and approximation properties of the underlying continuous models can inform the selection of network structures and hyperparameters, leading to more robust and interpretable models.

Third, this approach creates a principled pathway for embedding domain-specific knowledge, such as physical laws, geometric structures, or conservation principles, directly into the design of neural architectures, leading to architectures tailored for specific scientific or engineering tasks.

In summary, our formulation not only advances the theoretical understanding of deep neural networks but also provides actionable tools for the principled design of next-generation models. It bridges the gap between continuous mathematical modeling and discrete network implementation, establishing a foundation for informed, explainable, and application-aware neural network development.

This paper is organized as follows: We introduce the integral equation for Transformer with single-head attention in continuous setting in Section \ref{sec.model}. An operator-splitting based algorithm is proposed in Section \ref{sec.dis} to solve the integral equation. We show in Section \ref{sec.connections} that the discretized algorithm exactly recovers the Transformer encoder in \cite{vaswani2017attention}, and also discuss how to adapt the proposed algorithm to recover the architecture of Vision Transformer (ViT) \cite{dosovitskiyimage}. In Section \ref{sec.multihead}, we extend the integral equation and proposed algorithm to Transformer with multi-head attention. Extension to convolutional Transformer and its relation to Convolutional vision Transformer (CvT) \cite{wu2021cvt} are discussed in Section \ref{sec.convolution}. This paper is concluded in Section \ref{sec.conclusion}.

In the rest of this paper, we use bold lowercase letters to denote vectors, normal letters to denote scalars and functions, and calligraphic letters to denote operators.

\subsection*{Related works}
It is essential to examine the results of relevant literature and situate our approach within a historical context. Compared to Transformers, more studies can be found for deep neural networks (DNNs). A widely accepted interpretation views DNN as nonlinear high-dimensional function approximators \cite{yarotsky2017error,lu2021deep, zhou2020universality,chen2022nonparametric}. Under this view, the layers of a DNN perform successive nonlinear transformations that approximate complex mappings from input to output data. This approximation framework has been instrumental in explaining the expressive power of DNNs.
Another promising direction for explaining and designing neural network architectures involves interpreting them as discretizations of continuous-time dynamical systems governed by differential or integral equations \cite{chen2018neural,ma2020machine,benning2019deep,weinan2017proposal}. 
E et al. show that DNNs can be viewed as continuous dynamical systems in \cite{weinan2017proposal} and special discretization of continuous problems in \cite{ma2020machine}. In \cite{chen2018neural}, Chen et al. pointed out connections between residual networks and ordinary differential equations, and proposed Neural ODE. Benning et al. investigated the relations between DNNs and optimal control problems \cite{benning2019deep}. 
In this class of approach, neural network training is seen as an optimal control problem constrained by a continuous evolution equation. 
Related to the approach mentioned above, we especially want to mention some recent studies that have proposed a general framework for designing neural network architectures inspired by continuous-time dynamical systems \cite{tai2024pottsmgnet,liu2024double}. These methods leverage operator-splitting techniques to construct neural networks, which are so-called continuous neural networks, as discretizations of continuous differential equations. The continuous learning problem is an optimal control problem with this ``neural network'' as a constraint. In this framework, the evolution of hidden states in a network is governed by differential equations (continuous neural network). The goal of this continuous learning problem is to find optimal controls—represented by learnable parameters—within a discretized dynamical process. 

The use of operator-splitting allows these complex dynamics to be decomposed into simpler substeps, each of which corresponds to a specific type of operation (e.g., convolution, nonlinearity). These substeps are then unrolled into individual layers of a neural network. This interpretation not only clarifies the structure and purpose of different architectural components but also facilitates the incorporation of prior knowledge about the underlying data dynamics or physics.

Building on this theory, we have recently provided a rigorous mathematical explanation of the well-known UNet architecture \cite{tai2024mathematical}. We demonstrated that UNet can be viewed as a specific discretization of the following simple differential equation, where each encoder and decoder stage corresponds to a particular substep in the operator-splitting scheme: 
\begin{equation}
	\begin{cases}
		\frac {\partial u(\xb,t)}{\partial t}  =  W(\xb,t) \ast u(\xb,t) +  d(t) -\ln  \frac {u(\xb,t)} {1-u(\xb,t)}+ \partial I_{\Sigma}(u) , 
		\ (\xb,t) \in \Omega\times (0,T], \\ 
		u(\xb, 0)      = f(\xb), \ \xb \in \Omega,
	\end{cases}
	\label{eq.control0}
\end{equation}
where $\ast$ denotes convolution, $I_{\Sigma}$ is the indicator function of $\Sigma=\{u:u(\xb)\geq 0 \mbox{ for } \xb\in \Omega\}$, $f$ is a function defined in a domain $\Omega$ and $T$ is the final time chosen by the user for time variable $t$.  The learnable variables $\theta=\{W(\xb,t),  d(t)\}$ are regarded as control variables which will be learned in the training process. The mapping  $\mathcal{N}_\theta(f) \mapsto u(\xb,T)$ turns out to be the UNet in the continuous setting. 
This perspective not only elucidates the structure of the UNet architecture but also provides a foundation for systematic modifications and enhancements. A range of alternative interpretations and mathematically inspired designs of deep neural networks can be found in the literature, including \cite{ruthotto2020deep, benning2019deep, hagemann2025provable,ruiz2023neural, haber2018learning, he2019mgnet, lan2023dosnet, long2018pde, gregor2010learning, yang2018admm,altekruger2023neural}. For example, neural networks have been interpreted as optimal control problems in \cite{benning2019deep, ruiz2023neural}. Gregor and LeCun \cite{gregor2010learning} were among the first to connect deep learning with sparse coding through learned iterative shrinkage algorithms. Ruthotto and Haber \cite{ruthotto2020deep} proposed neural networks derived from partial differential equations (PDEs), while He and Xu \cite{he2019mgnet} introduced MgNet, which employs multigrid principles for feature extraction. Hagemann et al. \cite{hagemann2022stochastic} formulated stochastic normalizing flows as Markov chains to tackle inverse problems. Altekr{\"u}ger et al. \cite{altekruger2023conditional} analyzed the robustness of conditional generative model. Long et al. \cite{long2018pde} built PDENet based on finite difference schemes to recover PDE models from data. Yang et al. \cite{yang2018admm} constructed ADMM-CSNet by unrolling the ADMM optimization algorithm into a network architecture. Wang et al. \cite{wang2023diffusion} developed a diffusion-residual network inspired by convection-diffusion ODEs for classification tasks. Most recently, Martin et al. \cite{martin2024pnp} introduced a plug-and-play architecture using flow matching for image restoration. These pioneering efforts have provided essential insights and have directly inspired our recent work \cite{tai2024pottsmgnet, tai2024mathematical} and the present study.

\section{The Continuous Model}
\label{sec.model}

In the following, we explain the Transformer in the seminal work in \cite{vaswani2017attention} in the continuous setting and show later that proper discretizations of the proposed model will recover the Transformer exactly.

\subsection{A brief recap of the discrete Transformer encoder}
\label{sec:discrete-transformer}

The Transformer architecture introduced in~\cite{vaswani2017attention} is built around a core encoder block that processes sequential data through three principal components: a self-attention mechanism, layer normalization, and a feedforward network. In this section, we briefly recall its mathematical formulation, which will serve as the starting point for our continuous interpretation.

Let the input to a Transformer encoder layer be represented by a matrix
\[
\mathbf{u} = [\mathbf{u}_1^\top, \mathbf{u}_2^\top, \dots, \mathbf{u}_{N_x}^\top]^\top \in \mathbb{R}^{N_x \times N_y},
\]
where $N_x$ is the number of tokens and $N_y$ is the embedding dimension. Each row $\mathbf{u}_i \in \mathbb{R}^{N_y}$ corresponds to the embedded representation of a token.

\noindent\textbf{Self-Attention:}
The self-attention mechanism transforms $\mathbf{u}$ through learned weight matrices $\mathbf{W}^Q, \mathbf{W}^K, \mathbf{W}^V \in \mathbb{R}^{N_y \times N_y}$ to obtain query, key, and value matrices:
\[
\mathbf{Q} = \mathbf{u} \mathbf{W}^Q, \quad
\mathbf{K} = \mathbf{u} \mathbf{W}^K, \quad
\mathbf{V} = \mathbf{u} \mathbf{W}^V.
\]
The scaled dot-product attention is then computed as:
\[
\mathrm{Attention}(\mathbf{Q}, \mathbf{K}, \mathbf{V}) = \mathrm{Softmax}\!\left( \frac{\mathbf{Q} \mathbf{K}^\top}{\sqrt{N_y}} \right) \mathbf{V},
\]
where the softmax is applied row-wise. This operation allows each token to aggregate information from all tokens in the sequence, weighted by their compatibility.

\noindent\textbf{Layer Normalization:}
Following~\cite{ba2016layer}, layer normalization is applied per token (row-wise). Given an input vector $\mathbf{z} \in \mathbb{R}^{N_y}$, the normalized output is:
\[
\mathrm{LayerNorm}(\mathbf{z}) = \frac{\mathbf{z} - \mu}{\sigma} \odot \bgamma + \bbeta,
\]
where $\mu$ and $\sigma$ are the mean and standard deviation of $\mathbf{z}$, $\odot$ represents the elementwise product and $\bgamma, \bbeta \in \mathbb{R}^{N_y}$ are learnable parameters.

\noindent\textbf{Feedforward Network:}
The feedforward network consists of two linear transformations with a ReLU activation in between:
\[
\mathrm{FFN}(\mathbf{z}) = \mathrm{ReLU}\!\left(\mathbf{z} \mathbf{W}_1 + \mathbf{b}_1\right) \mathbf{W}_2 + \mathbf{b}_2,
\]
where $\mathbf{W}_1, \mathbf{W}_2 \in \mathbb{R}^{N_y \times N_y}$ and $\mathbf{b}_1, \mathbf{b}_2 \in \mathbb{R}^{N_y}$.

\noindent\textbf{Overall Encoder Block:}
A standard Transformer encoder layer combines these components with residual connections~\cite{vaswani2017attention}:
\[
\begin{aligned}
	&\mathbf{u}' = \mathbf{u} + \mathrm{Attention}(\mathbf{Q}, \mathbf{K}, \mathbf{V}), \\
	&\mathbf{u}'' = \mathrm{LayerNorm}(\mathbf{u}'), \\
	&\mathbf{u}''' = \mathbf{u}'' + \mathrm{FFN}(\mathbf{U}''), \\
	&\mathbf{u}_{\mathrm{out}} = \mathrm{LayerNorm}(\mathbf{u}''').
\end{aligned}
\]
This discrete, layer-wise processing will be our reference structure. In what follows, we show that this architecture can be viewed as a numerical discretization of a continuous dynamical system.

\subsection{The continuous setting for Transformers}
The interpretation of deep neural networks as discretizations of continuous-time dynamical systems has gained considerable traction in recent years~\cite{chen2018neural,ruthotto2020deep,tai2024pottsmgnet,weinan2017proposal}. These continuous viewpoints not only deepen theoretical understanding but also provide principled frameworks for architecture design and analysis.

In this section, we propose a control problem in continuous setting, named continuous Transformer, that reproduces the standard Transformer encoder through a structured discretization scheme. The continuous Transformer is a time-dependent integral equation. We will subsequently show that after discretizations by operator splitting for the time variable $t$ and spatial discretizations for the $\xb, \yb$ variables, the resulting splitting method recovers the Transformer architecture.   
The Transformer proposed in \cite{vaswani2017attention} consists of three components: attention, layer normalization, and feedforward networks. In the derivation of our control problem, we show that each component has a corresponding operation in the integral equation. We first focus on single-head attention. Our framework can be easily extended to the multi-head case, which will be detailed in Section \ref{sec.multihead}.

To make the explanation clearer and simpler, our notations and setup will follow \cite{vaswani2017attention}, but in a continuous counterpart. We consider bounded sets $\Omega_x=[0,L_x]^{d_x},\Omega_y=[0,L_y]^{d_y}$ for $L_x,L_y>0$ and positive integers $d_x,d_y$. When $d_x=d_y=1$, this corresponds to a sequence of bounded tokens and a bounded embedding in practice. Let $u(\xb,\yb,t)$ be a function of $\xb,\yb$ and time $t$. In applications of LLMs,  $\xb\in \Omega_x$ refers to the index of tokens, $\yb\in \Omega_y$ refers to entries of token vectors. Please refer to Appendix \ref{appendix:A} for explanations of the physical meanings of $\xb, \yb$, and $u$ for applications of Transformers to LLM.

As stated in \cite{vaswani2017attention}, attention is all you need for the Transformer. In fact, we show below that the attention layer has a very simple mathematical explanation in the continuous setting. It consists of integral transformations on the input functions to extract features with three different kernels. We use features extracted from the first two kernels to generate the attention score through a softmax operator, and then multiply the attention score by the feature extracted from the third kernel to compute the output. All three kernels in the integral transformation will be parameterized as learnable variables, see (\ref{eq:control0}). 

\subsection{Integral Transformations for Attention and Feature Extraction}
Let the functions $W^{Q}(\yb,\yt,t)$, $W^{K}(\yb, \yt,t)$, and $W^{V}(\yb,\yt,t)$ be three kernels defined on $\Omega_y\times\Omega_y\times [0,T]$, and they will be learned from data as shown in LLM applications. 
See Appendix \ref{appendix:A} for explanations of the physical meanings of $\xb, \yb, u(\xb,\yb, t)$.
For a given function $u(\xb,\yb, t) $, we define integral transformations:
\begin{align}
	&Q(\xb,\yb,t;u)=\mylangle W^{Q}(\cdot,\yb,t),u(\xb,\cdot,t)\myrangle_{\Omega_y}=\int_{\Omega_y}W^{Q}(\xi,\yb,t)u(\xb,\xi,t)d\xi ,
	\label{eq.atten.trans1}
	\\
	&K(\xb,\yb,t;u)=\mylangle W^{K}(\cdot,\yb,t),u(\xb,\cdot,t)\myrangle_{\Omega_y}=\int_{\Omega_y}W^{K}(\xi,\yb,t)u(\xb,\xi,t)d\xi , 
	\label{eq.atten.trans2}
	\\
	&V(\xb,\yb,t;u )=\mylangle W^{V}(\cdot,\yb,t),u(\xb,\cdot,t)\myrangle_{\Omega_y}=\int_{\Omega_y}W^{V}(\xi,\yb,t)u(\xb,\xi,t)d\xi.
	\label{eq.atten.trans3}
\end{align}
Here and later, we use $\langle \cdot, \cdot \rangle_{\Omega_y} $ to denote $L^2$ inner product on a given domain $\Omega_y $. Its induced norm will be denoted by $\|\cdot \|_{\Omega_y}$. 
Functions  $Q$ and $K$ will be used to generate attention scores, and $V$ is used to extract features from $u$. When all these functions are discretized over a pixel-type grid over $\Omega_x\times \Omega_y$, the discrete counterparts of these functions are represented by matrices and the discrete counterparts of the integral transformations are just standard matrix multiplications, see Section \ref{sec.space.dis}, especially formula (\ref{eq:dis-integral}). We compute the attention score by 
\begin{align}
	\gamma(\xb,\widetilde{\xb},t; u )=\text{Softmax}_{2}\left(\frac{1}{\sqrt{|\Omega_y|}}\mylangle Q(\xb,\cdot,t;u  ),\ K(\widetilde{\xb},\cdot,t;u ) \myrangle_{\Omega_y}\right) , 
	\label{eq.scale.soft}
\end{align}
where $\gamma(\xb,\widetilde{\xb},t )$ is defined on $\Omega_x\times \Omega_x\times [0,T]$, $\text{Softmax}_{2}$ represents the softmax function along the second dimension defined as:
\[ \text{Softmax}_{2}(a(\xb,\widetilde{\xb},t))=\frac{\exp(a(\xb,\widetilde{\xb},t))}{\int_{\Omega_x}\exp(a(\xb,\eta,t))d\eta} .  \label{eq.atten.trans4} \]
The output of the attention layer is then obtained by taking the inner product of the attention score $\gamma$ and the feature $V$: $\mylangle \gamma(\xb,\cdot,t;u),V(\cdot,\yb,t;u)\myrangle_{\Omega_x}.$
We emphasize that the first two integral transformations in (\ref{eq.atten.trans1})--(\ref{eq.atten.trans2}) are used to generate the attention score $\gamma(\xb, \tilde\xb, t;u)$ and the last integral transformation (\ref{eq.atten.trans3}) will be used for feature extractions. All three integral kernels will be learned during training, see explanations later in this section.

\subsection{Mathematical formulations for the Layer Normalization} 
The layer normalization \cite{ba2016layer} in the Transformer \cite{vaswani2017attention} and also many other neural networks \cite{ziaee2022batch,zhang2019lightweight,xu2019understanding,cui2022layer} can be mathematically described as a projection of a function to a set with given mean value $\sigma_1$ and variance $\sigma_2^2$.   
Define sets
\begin{align}
	&S_{1}(\sigma_1,\sigma_2)=\bigg\{u: \frac{1}{|\Omega_y|}\int_{\Omega_y}u(\xb,\xi,t)d\xi=\sigma_1,\nonumber\\
	&\hspace{5cm}\frac{1}{|\Omega_y|}\int_{\Omega_y}(u(\xb,\xi,t)-\sigma_1)^2d\xi=\sigma_2^2\bigg\},\\
	&S_2=\{u: u\geq 0\},
\end{align}
and their corresponding indicator functions 
\begin{align}
	I_{S_1(\sigma_1,\sigma_2)}(u)=\begin{cases}
		0 & \mbox{ if } u\in S_1,\\
		+\infty & \mbox{ otherwise,}
	\end{cases} \quad
	I_{S_2}(u)=\begin{cases}
		0 & \mbox{ if } u\in S_2,\\
		+\infty & \mbox{ otherwise.}
	\end{cases}
\end{align}
Later, we shall show that layer normalization is just a projection of a function into the set  $S_1(\sigma_1,\sigma_2)$, see Section \ref{sec.normalization.dis} and the ReLU activation function is just a projection of a function to the set $S_2$, see Section \ref{sec.fnn.dis}. 

\subsection{Our continuous Transformer}
In addition to the control variables $W^{Q}(\yb,\yt,t)$, $W^{K}(\yb, \yt,t)$, $W^{V}(\yb,\yt,t)$, let us also introduce control variables $\{W_j(\yt,\yb,t)\}_{j=1}^J$ on $\Omega_y\times \Omega_y\times [0,T]$, and $b(\xb,t),b_j(\xb,t)$ on $\Omega_x\times [0,T]$.  
We call the following continuous integral equation a {\bf Continuous Transformer}:
\begin{align}
	\begin{cases}
		u_t=\underbrace{\mylangle \gamma(\xb,\cdot,t;u),\ V(\cdot,\yb,t;u)\myrangle_{\Omega_x} }_{\mbox{I: attention} } + \underbrace{\partial I_{S_{1}(\sigma_1(t),\sigma_2(t))}(u)}_{\mbox{II: layer normalization}} \\
		\hspace{1cm}+ \underbrace{\sum_{j=1}^{J}\left(\mylangle W_j(\cdot,\yb,t), \ u(\xb,\cdot,t)\myrangle_{\Omega_y}+ b_j(\xb,t)\right)
			+ \partial I_{S_2}(u)}_{\mbox{III: fully connected network}}, \quad t\in (0,T], \\
		u(\xb,\yb,0)=f(\xb,\yb).
	\end{cases}
	\label{eq.control}
\end{align}
For simplicity and to align with common practice in Transformer implementations, we equip (\ref{eq.control}) with zero boundary conditions or periodic boundary conditions in both $\xb$ and $\yb$. This is commonly used in LLMs and image processing applications.

We shall show in the rest of this paper that the Transformer proposed in \cite{vaswani2017attention} is just a discretization of this continuous Transformer. 
In (\ref{eq.control}),  $f$ is some initial state, $T$ is a fixed time chosen by the user. Following conventions in the literature, we denote all the control variables as 
\begin{equation}
	\theta=\left\{W^Q(\xb,\yb,t),W^K(\xb,\yb,t),W^V(\xb,\yb,t),\{W_j(\xb,\yb,t),b_j(\xb, t)\}_{j=1}^J,\sigma_1(t),\sigma_2(t)\right\}. 
	\label{eq:theta}    
\end{equation}
The Continuous Transformer is the mapping: 
\begin{equation}
	\mathcal{N}_\theta: f \mapsto  u(\xb,\yb, T). 
	\label{eq:ntheta}
\end{equation}

In Section \ref{sec.dis}, we will discretize (\ref{eq.control}) by the operator-splitting method for the time variable and use a uniform grid for the spatial variables $\xb,\yb$.  After discretization, the continuous Transformer becomes the Transformer proposed in \cite{vaswani2017attention}. In relation to this equivalence, the operations in the right-hand side of (\ref{eq.control}) can be classified into three sets:
\begin{itemize}
	\item Operations in Set I correspond to attentions in Transformers. It computes the attention score $\gamma(\xb, \tilde \xb, t; u)$ and applies it to the extracted features in $V(\xb,\yb,t)$.
	
	\item Operations in Set II correspond to layer normalization. For each token location $\xb$, it requires function $u$ to have mean $\sigma_1$ and variance $\sigma_2^2$ along the $\yb$ direction. In other words, for each token, the corresponding series is expected to have mean $\sigma_1$ and variance $\sigma_2^2$.
	
	\item Operations in Set III correspond to feedforward networks with $J$ layers. The first component corresponds to linear layers, each of which contains one linear transformation $W_j$ and a bias $b_j$. The second component corresponds to activation functions. This term projects $u$ to $S_1$, which can be realized by ReLU functions.
\end{itemize}

\subsection{The continuous control problem for learning}
Our objective is to optimize $\theta$ so that given an input $f(\xb,\yb)$, the control problem will drive $u(\xb,\yb,T)$ to a desired state $v$. Specifically, given a dataset $\{(u_i,v_i)\}_{i=1}^B$, where $u_i$ is the input and $v_i$ is the target state, let $\ell(\cdot,\cdot)$ be a loss function measuring the discrepancy between its arguments.  
Let $\theta, \mathcal{N}_\theta$ be as defined in (\ref{eq:theta})--(\ref{eq:ntheta}), the training process with the loss function $\ell$ is in fact a process to optimize $\theta$ for the following minimization problem: 
\begin{align}
	\min_{\theta} \frac{1}{B}\sum_{i=1}^B\ell(\cN_{\theta}(u_i),v_i).
	\label{eq:control0}
\end{align}
This is essentially an optimization problem constrained by an integral differential equation of the following form: 
\begin{align}
	\min_{\theta} \frac{1}{B}\sum_{i=1}^B\ell(w_i, v_i), \mbox{ under constraint } w_i=\cN_{\theta}(u_i) \mbox{ solves the control problem (\ref{eq.control})}.
\end{align}

\section{Discretizations of the Continuous Problem}
\label{sec.dis}
Following the idea of PottsMGNet \cite{tai2024pottsmgnet}, in this section, we use some proper operator-splitting methods to discretize the time variable for (\ref{eq.control}). 
For a review about operator splitting methods, please refer to 	\cite{glowinski1989augmented,glowinski2017splitting,glowinski2016some}. 
Specifically, we discretize the temporal domain of (\ref{eq.control}) by sequential splitting, c.f. \cite[Sec. 2.2]{glowinski2016some}. The number of sequential splittings corresponds to the number of layers in Transformers. Spatial discretization for $\xb$ and $\yb$ will be discussed in Section \ref{sec.space.dis}. The discretization of $\xb$ determines the number of tokens in the input, and the discretization of $\yb$ determines the size of the embedding vector for each token.

\subsection{The overall time discretization}
\label{sec.learning.timedis}
Let $\{t^n\}_{n=0}^{N_t}$ be the set of time grids with time step $\dt=T/N_t$. We denote the numerical solution of (\ref{eq:control0}) at time $t^n$ by $u^n$. We will design a numerical scheme to approximate $\cN_{\theta}$. Specifically, for each $t^n$, we design a discrete propagator $\bar{\cN}^n$ propagating $u^{n-1}$ to $u^n$. Then, problem (\ref{eq.control}) is numerically approximated  by
$$
u(t^{N_t})\approx u^{N_t}=\bar{\cN}^{N_t}\circ \bar{\cN}^{N_t-1} \circ \cdots \circ \bar{\cN}^{1}(f),
$$
see Figure \ref{fig.architecture}(a) for an illustration. 

\begin{figure}[t!]
	\centering
	(a)\\
	\includegraphics[scale=0.22]{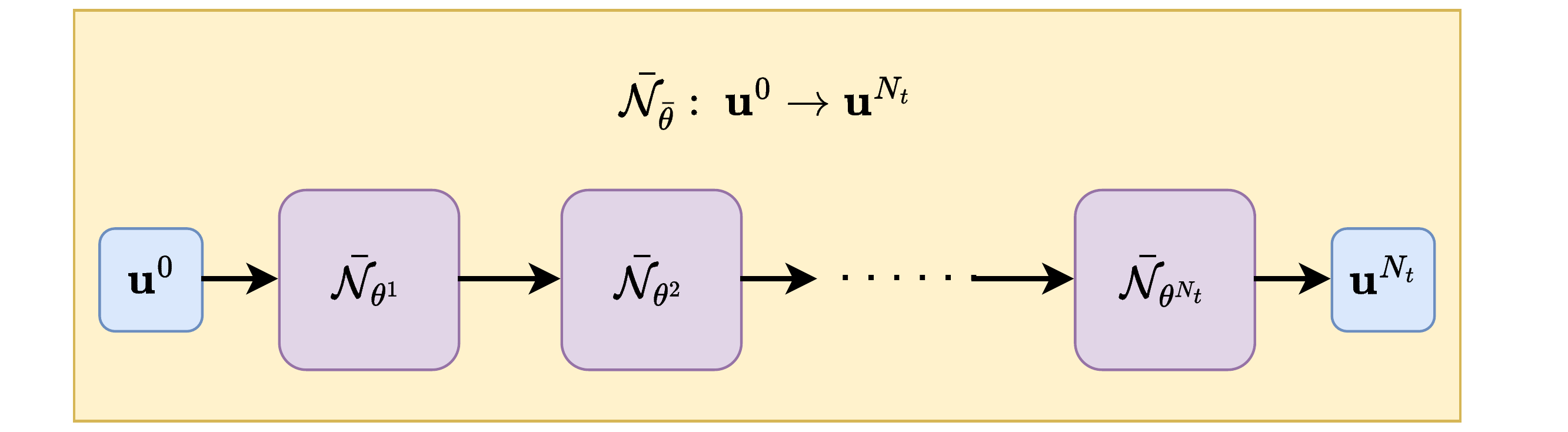}\\	
	(b)\\
	\includegraphics[scale=0.22]{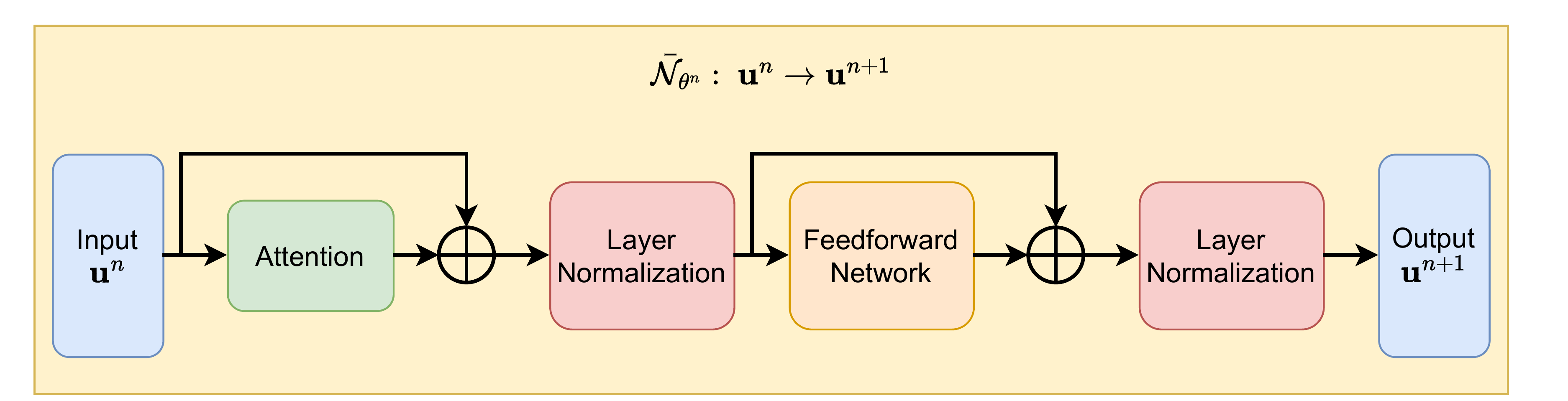}
	\caption{(a) Illustration of the whole evolution. (b) Illustration of the discretized operator-splitting scheme (\ref{eq.split.attention})--(\ref{eq.split.normal2}). }
	\label{fig.architecture}
\end{figure}

In each $\bar{\cN}^n$, time-discretized control variables $\theta$ will be used. We denote the control variables used in $\bar{\cN}^n$ by $\theta^n$. 
To show the dependence of $\bar{\cN}^n$ on the control variable $\theta^n$, we will also use $\bar{\cN}^n_{\theta^n}$ to denote the propagator $\bar{\cN}^n$.  
From the following subsections, c.f. \S\ref{sec.space.dis}, we will see that 
$$
\theta^n=\{W^{Q,n-1},W^{K,n-1},W^{V,n-1},\{W_j^{n-1},b_j^{n-1}\}_{j=1}^J,\sigma_1^{n-1},\sigma_2^{n-1}\}.
$$ 
Denote all of the discretized variables by $\bar{\theta}$, i.e. $\bar{\theta}= \{\theta^n\}_{n=1}^{N_t}$.  The whole evolution procedure from $f$ to $u^{N_t}$ by $\bar{\cN}_{\bar \theta}$,  i,e, 
$\bar{\cN}_{\bar \theta} =   \bar{\cN}^{N_t}_{\theta^{N_t}}\circ \bar{\cN}^{N_t-
	1}_{\theta^{N_t-1}} \circ \cdots \circ \bar{\cN}^{1}_{\theta^1}$.
Given a dataset $\{(u_i,v_i)\}_{i=1}^B$, then the discrete training problem is to solve the following  constrained optimization problem: 
\begin{align}
	\min_{\bar{\theta}} \frac{1}{B}\sum_{i=1}^B\ell(\bar{\cN}_{\bar{\theta}}(u_i),v_i).
	\label{eq.loss.dis}
\end{align}

\subsection{Time Discretizations by Operator Splitting Method}
\label{sec.timedis}
In the following sections, we illustrate our numerical scheme for propagator $\bar{\cN}^1_{\theta^1}$. Propagator $\bar{\cN}^n_{\theta^n}$ for $n=2,...,N_t$ is exactly the same by replacing $\theta^1$ by $\theta^n$.
Our numerical scheme is designed based on operator splitting methods. 
For simplicity, we take $\Delta t=1$. It is often used in data analysis and image processing literature and is equivalent to changing the network parameters properly.  
In the following, with a slight abuse of notation, we denote the time-discretized version of any time-dependent function \( a(\cdot, t) \) by \( a^n(\cdot) := a(\cdot, t^n) \), where \( t^n \) is the \( n \)-th time step.
Using the Lie scheme  \cite[Sec. 2.2]{glowinski2016some} as explained in  Appendix \ref{appendix:c}, given $u^0 =f$, we compute $u^{1}$ by $M=4+J$ substeps:

\noindent {\bf Substep 1}:
Solve $u^{1/M}$ from
\begin{align}
	&u^{1/M}-u^{0}= \mylangle \gamma^0(\xb,\cdot;u^{0}), \ V^0(\cdot,\yb;u^{0})\myrangle_{\Omega_x}.
	\label{eq.split.attention}
\end{align}

\noindent {\bf Substep 2}:  Compute $u^{2/M}$ from
\begin{align}
	u^{2/M}-u^{1/M}=\partial I_{S_1(\sigma_1^0,\sigma_2^0)}(u^{2/M}).
	\label{eq.split.normal1}
\end{align}

\noindent {\bf Substep 2+j }:  
Compute $u^{(2+j)/M}$ sequentially for $j=1,...,J-1$ from
\begin{align}
	u^{(2+j)/M}-u^{(1+j)/M}= \mylangle W_j^0(\cdot,\yb), \ u^{(1+j)/M}(\xb,\cdot)\myrangle_{\Omega_y}+ b^0_j(\xb) + \partial I_{S_2}(u^{(2+j)/M}).
	\label{eq.split.fnn}
\end{align}

\noindent {\bf Substep 2+J }:  
Compute $u^{(2+J)/M}$ from
\begin{align}
	u^{(2+J)/M}-u^{(1+J)/M}= \mylangle W_j^0(\cdot,\yb), \ u^{(1+J)/M}(\xb,\cdot)\myrangle_{\Omega_y}+ b^0_j(\xb).
	\label{eq.split.fnnfinal}
\end{align}

\noindent {\bf Substep 3+J}: 
Compute $u^{(3+J)/M}$ from
\begin{align}
	u^{(3+J)/M}= \frac{1}{2}(u^{(2+J)/M}+u^{2/M}).
	\label{eq.split.skip2}
\end{align}

\noindent {\bf Substep 4+J}: 
Compute $u^1$ from 
\begin{align}
	u^{1}-u^{(3+J)/M}=\partial I_{S_1(\sigma_1^0,\sigma_2^0)}(u^{1}).
	\label{eq.split.normal2}
\end{align}
Note that in (\ref{eq.split.attention})--(\ref{eq.split.fnnfinal}) and (\ref{eq.split.normal2}), the left hand side $u^{(k+1)/M}-u^{k/M}=\frac{u^{(k+1)/M}-u^{k/M}}{\Delta t}$ with $\Delta t=1$  is the finite difference approximation of $u_t$. 

Substep 1 to Substep $(4+J)$ evolve $u^0$ to $u^1$ by sequentially passing it through all operators in (\ref{eq.control}), corresponding to all components in Transformer from input to output. Substep 1 corresponds to the attention layer and the skip connection. Substep 2 corresponds to layer normalization. Substep 3 to Substep $(2+J)$ correspond to the $J$ layers in the fully connected network. Substep $3+J$ corresponds to the skip connection after the fully connected network. The last substep corresponds to the final layer normalization.  From here, we can already see that each substep in the Lie splitting scheme corresponds to one layer in the Transformer. In \cite{vaswani2017attention}, it used $J=2$ and thus $M=6$. The different layers in the Transformer is just a sequential computation of the substep solutions: 
\begin{multline}
	u^0 \quad \text{(input token)} 
	\mapsto u^{1/6} \text{(attention)} 
	\mapsto u^{2/6} \text{(normalization layer)} \\
	\mapsto u^{3/6} \mbox{ and } u^{4/6} \text{(fully connected layer with ReLU)} \\
	\mapsto u^{5/6} \text{(skip connection layer)} 
	\mapsto u^1  \text{(normalization layer)}.
\end{multline}

\begin{remark}
	In scheme (\ref{eq.split.attention})--(\ref{eq.split.normal2}), substep (\ref{eq.split.skip2}) is a relaxation step and substep (\ref{eq.split.normal1}) and (\ref{eq.split.normal2}) repeat $\partial I_{S_1(\sigma_1^0,\sigma_2^0)}$ twice. These two are numerical techniques commonly used to improve algorithm stability and accuracy. For example, it is shown in \cite{ketcheson2019relaxation} that applying relaxation technique (averaging step) to Runge-Kutta methods helps guarantee conservation or stability with respect to any inner-product norm. For another example, compared to Lie's splitting, the Strang splitting intentionally repeat an operator twice which improves the accuracy from first order to second order \cite{glowinski2017splitting}.
\end{remark}

\begin{remark}
	The Lie splitting scheme was used for three reasons: (1) The layer-by-layer structure of the Transformer naturally aligns with a sequential splitting approach, where each operator (attention, normalization, feedforward) is applied in a stepwise manner, mirroring the sequential flow of data through the encoder block. (2) Sequential splitting allows each substep to be mapped directly to a specific component of the Transformer. This enhances interpretability and facilitates the extension of the framework to variants such as Vision Transformers and Convolutional Transformers. (3) While parallel splitting could also be applied, they do not directly recover the standard Transformer’s serial layer composition. The sequential scheme is thus the most natural choice for reproducing the Transformer architecture. 
\end{remark}

\subsection{Solutions to Each Subproblem}
\label{sec.sol-to-sub}
In the following, we supply the details for the solution to each subproblem in the splitting scheme (\ref{eq.split.attention})--(\ref{eq.split.normal2}).

\subsubsection{Solution to Subproblem (\ref{eq.split.attention})}
We have
\begin{align}
	u^{1/M}=&u^{0} 
	+ \int_{\Omega_{x}}\soft_{2}\left( \frac{1}{\sqrt{|\Omega_y|}}\mylangle Q^0(\xb,\cdot;u^{0}),\ K^0(\eta,\cdot;u^{0};)\myrangle_{\Omega_y}\right) V^0(\eta,\yb;u^{0})d\eta.
	\label{eq.split.attention.tdis}
\end{align}
This step is a continuous version of the attention layer in \cite{vaswani2017attention}. It looks simple and uses two integral transformations to get the attention score through a softmax operator. This score function has values between $[0,1]$ and it is multiplied with the $V^0(\eta,\yb;u^{0})$, which are the extracted features from the input data through another integral transformation. 

\subsubsection{Solution to Subproblem (\ref{eq.split.normal1}) and (\ref{eq.split.normal2})}
\label{sec.normalization.dis}

For $u^{2/M}$ and $u^{1}$ in problem (\ref{eq.split.normal1}) and (\ref{eq.split.normal2}), they are in the form of
\begin{align}
	u-v=\partial I_{S_1(\sigma_1,\sigma_2)}(u)
	\label{eq.ln.gen}
\end{align}
for the given function $v(\xb,\yb)$ and constants $\sigma_1,\sigma_2$.
It is easy to derive that the solution $u$ of (\ref{eq.ln.gen}) solves the following problem: 
\begin{align}
	u=\argmin_{\bar{u}\in S_1(\sigma_1,\sigma_2)} \frac{1}{2} \|\bar{u}-v\|_{\Omega_y}^2.
	\label{eq.ln.min}
\end{align}
This shows that $u$ is a projection of $v$ to the set $S_1(\sigma_1,\sigma_2)$.  The following theorem gives a closed-form solution formula for $u$:
\begin{theorem}\label{thm.ln}
	The solution to problem (\ref{eq.ln.min}) is given as:
	\begin{align}
		&u(\xb,\yb)=\frac{v(\xb,\yb)-\alpha(\xb)}{\sqrt{\beta(\xb)}}\sigma_2+\sigma_1,
		\\
		\mbox{with } &\alpha(\xb;v)=\frac{1}{|\Omega_y|}\int_{\Omega_y} v(\xb,\xi)d\xi, \quad 
		\beta(\xb;v)=\frac{1}{|\Omega_y|}\int_{\Omega_y} (v(\xb,\xi)-\alpha(\xb))^2d\xi.
		\label{eq.ln.express}
	\end{align}
\end{theorem}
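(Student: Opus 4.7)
Since both constraints defining $S_1(\sigma_1,\sigma_2)$ only couple values of $\bar u(\xb,\cdot)$ along $\Omega_y$, the minimization \eqref{eq.ln.min} decouples into an $\xb$-parametrized family of constrained projection problems in $L^2(\Omega_y)$. For each fixed $\xb$, the feasible slice is the intersection of the affine hyperplane $\{\bar u(\xb,\cdot) : \text{mean along }\yb = \sigma_1\}$ and the sphere of radius $\sigma_2\sqrt{|\Omega_y|}$ centered at the constant function $\sigma_1$, i.e.\ a ``generalized circle'' in $L^2(\Omega_y)$. My plan is to project onto this set via an orthogonal decomposition into mean and zero-mean components, from which the closed form in \eqref{eq.ln.express} drops out.

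Concretely, I would write $v(\xb,\cdot) = \alpha(\xb) + (v - \alpha)$ and, for any admissible $\bar u$, $\bar u = \sigma_1 + (\bar u - \sigma_1)$. Since constant pieces and zero-mean pieces are $L^2(\Omega_y)$-orthogonal, Pythagoras yields
\[
\|\bar u - v\|_{\Omega_y}^2 = (\sigma_1 - \alpha(\xb))^2 |\Omega_y| + \|(\bar u - \sigma_1) - (v - \alpha)\|_{\Omega_y}^2.
\]
The first term is independent of $\bar u$, so the problem reduces to projecting the zero-mean function $v-\alpha$ onto the sphere of radius $\sigma_2\sqrt{|\Omega_y|}$ inside the zero-mean subspace. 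Using $\|v-\alpha\|_{\Omega_y}^2 = |\Omega_y|\,\beta(\xb)$, the projection is $\bar u - \sigma_1 = \sigma_2 (v - \alpha)/\sqrt{\beta(\xb)}$, which rearranges to \eqref{eq.ln.express}.

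The main subtlety, and the only real obstacle, is the sign choice when rescaling onto the sphere: both $\pm \sigma_2 (v-\alpha)/\sqrt{\beta(\xb)}$ are feasible. On the sphere $\|\bar u - \sigma_1\|_{\Omega_y}^2$ is fixed at $\sigma_2^2|\Omega_y|$, so the only $\bar u$-dependent contribution to the objective is the cross term $-2\langle \bar u - \sigma_1,\, v-\alpha\rangle_{\Omega_y}$, whose minimization forces $\bar u - \sigma_1$ to be parallel, not anti-parallel, to $v-\alpha$, fixing the $+$ sign. Equivalently, a Lagrange-multiplier derivation gives the affine ansatz $u = av + b$, with $b = \sigma_1 - a\alpha$ from the mean constraint and $a^2 = \sigma_2^2/\beta(\xb)$ from the variance constraint; the same cross-term comparison selects $a = +\sigma_2/\sqrt{\beta(\xb)}$. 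The degenerate case $\beta(\xb) = 0$, in which $v(\xb,\cdot)$ is constant in $\yb$ and every point of the sphere is equidistant from $v$, should be stated as an exclusion (or handled by a continuous extension consistent with \eqref{eq.ln.express}).
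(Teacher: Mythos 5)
Your proof is correct, and it takes a genuinely different route from the paper's. The paper works through the Lagrange--multiplier/KKT system: it introduces $\lambda_1,\lambda_2$, solves for them by integrating the stationarity equation against $1$ and $(u-\sigma_1)$, substitutes back to obtain an implicit equation for $u$, and then verifies by substitution that the formula \eqref{eq.ln.express} satisfies it. Your argument instead exploits the $L^2(\Omega_y)$-orthogonality of constants and zero-mean functions, splits the objective by Pythagoras into a $\bar u$-independent piece plus the distance from $v-\alpha$ to the zero-mean sphere of radius $\sigma_2\sqrt{|\Omega_y|}$, and reads off the minimizer as the standard radial projection. Each route has something to recommend it: the paper's is a direct mechanical computation, but it only establishes that \eqref{eq.ln.express} is a KKT point, and it never discusses the sign ambiguity in $\pm\sigma_2(v-\alpha)/\sqrt{\beta}$ (both branches satisfy the stationarity system). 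Your approach is cleaner on exactly these two counts: the Pythagorean split proves global optimality over the feasible set outright, and the cross-term argument pins down the $+$ sign. You are also right to flag the degenerate case $\beta(\xb)=0$, which the paper silently ignores; there the sphere has no preferred point and \eqref{eq.ln.express} is ill-defined, so an explicit exclusion (or a convention) is needed in either proof.
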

Theorem \ref{thm.ln} is proved in Appendix \ref{sec.thm.ln.proof}. We remark that the expression (\ref{eq.ln.express}) recovers the normalization layer in the continuous setting.  
According to Theorem \ref{thm.ln}, we see that the solutions  $u^{2/M}$ and $u^{(4+J)/M}$ are explicitly given as: 
\begin{align}
	&u^{2/M}(\xb,\yb)=\frac{u^{1/M}(\xb,\yb)-\alpha(\xb;u^{1/M})}{\sqrt{\beta(\xb;u^{1/M})}}\sigma_2^0+\sigma_1^0,\\
	&u^{1}(\xb,\yb)=\frac{u^{(3+J)/M}(\xb,\yb)-\alpha(\xb;u^{(3+J)/M})}{\sqrt{\beta(\xb;u^{(3+J)/M})}}\sigma_2^0+\sigma_1^0.
\end{align}

\subsubsection{Solution to Subproblem (\ref{eq.split.fnn}) and (\ref{eq.split.fnnfinal})}
\label{sec.fnn.dis}
Problem (\ref{eq.split.fnn}) is a semi-implicit equation for $u^{(2+j)/M}$. It can be solved exactly by the following sequential splitting
\begin{align}
	\begin{cases}
		\bar{u}^{(2+j)/M}=u^{(1+j)/M}+ \mylangle W_j^0(\cdot,\yb),\ u^{(1+j)/M}(\xb,\cdot) \myrangle_{\Omega_y} + b_j^0(\xb),\\
		u^{(2+j)/M}-\bar{u}^{(2+j)/M}=\partial I_{S_2}(u^{(2+j)/M}).
	\end{cases}
	\label{eq.split.fnn.seq}
\end{align}
The first equation in (\ref{eq.split.fnn.seq}) is linear in $\bar{u}^{(2+j)/M}$, corresponding to a linear layer in a feed-forward network in continuous settings. The second equation solves
\begin{align}
	u^{(2+j)/M}=\argmin_{v\in S_1} \int_{\Omega_x}\int_{\Omega_y} |v-\bar{u}^{(2+j)/M}|^2d\yb d\xb.
\end{align}
It is easy to see that its solution can be computed point-wisely:
\begin{align}
	u^{(2+j)/M}=\max\{\bar{u}^{(2+j)/M},0\}=\ReLU(\bar{u}^{(2+j)/M}),
\end{align}
which corresponds ReLU activation. As a result, Substep $2+j$ for $j=1,...,J-1$ is a feedforward network layer activated by ReLU. Similarly, Substep $2+J$ in (\ref{eq.split.fnnfinal}) is just a linear feedforward layer.

\subsection{Spatial Discretizations}
\label{sec.space.dis}
In this subsection, we discretize the solution discussed in Section \ref{sec.sol-to-sub} spatially. We will show that after spatial discretizations, the splitting scheme (\ref{eq.split.attention})--(\ref{eq.split.normal2}) exactly recovers the Transformer architecture in \cite{vaswani2017attention}. 

Suppose $\Omega_x=[0,L_x],\ \Omega_y=[0,L_y]$. We uniformly discretize $\Omega_x,\Omega_y$ by $N_x$ and $N_y$ grids in the $\xb$ and $\yb$ coordinates, respectively. For simplicity, we suppose $L_x=N_x, L_y=N_y$, leading to discretization steps $\dx=L_x/N_x=1, \ \dy=L_y/N_y=1$, and the grids by $\xb_k=k, \yb_k=k$.  
We use $\one_m$ to denote an $m$-dimensional vector with elements 1. For a matrix $\Wb\in \RR^{N_x\times N_y}$ and a vector $\bb\in \RR^{N_x}$, the matrix $\zb=(\Wb+\bb)\in \RR^{N_x\times N_y}$ is computed so that
$$
\zb_{k,l}=W_{k,l}+b_k.
$$
This is an easy notation to add a vector to a matrix in the discrete setting. 
With the spatial discretizations above, we define the discrete integrals (sum) as
\begin{align}
	\int_{\bar{\Omega}_x} f d\xb =  \sum_{k=1}^{N_x} f_{k}, \quad \int_{\bar{\Omega}_x} \int_{\bar{\Omega}_y}f (\xb,\yb) d\xb d\yb =  \sum_{k=1}^{N_x}\sum_{l=1}^{N_y} f_{k,l}.
\end{align}

For any functions $v(\xb),w(\xb)$ defined on $\Omega_x$, denote their discrete counterpart by $\vb,\wb$. We denote the discretized inner product between $\vb$ and $\wb$ as
\begin{align}
	\mylangle \vb,\wb\myrangle_{\bar{\Omega}_x}=\vb^{\top}\wb=\sum_{k=1}^{N_x} v_k w_k.
\end{align}
The discretized integrals and inner products $\langle \cdot,\cdot\rangle_{\bar{\Omega}_y}$ and $\langle \cdot,\cdot\rangle_{\bar{\Omega}_x \times \bar{\Omega}_y}$ are defined similarly.

\subsubsection{Computing the Discrete Solutions  $u^{1/M}$ in (\ref{eq.split.attention.tdis})}
Before we discretize the formula (\ref{eq.split.attention.tdis}), we first discretize some related quantities and operations.
To compute the attention scores, we discretize the integral transformations as follows:
\begin{align}
	\Qb^0(\ub^0) = \ub^0 \Wb^{Q,0}, \quad 
	\Kb^0(\ub^0) = \ub^0 \Wb^{K,0}, \quad 
	\Vb^0(\ub^0) = \ub^0 \Wb^{V,0}. \label{eq:dis-integral}
\end{align}
Here, $\ub^0$ is in $\mathbb{R}^{N_x \times N_y}$ and $\Wb^{Q,0}, \Vb^{Q,0}, \Kb^{Q,0}$ are in $\mathbb{R}^{N_y \times N_y}$, and the expressions such as $\ub^0 \Wb^{Q,0}$ represent standard matrix multiplications that yield new matrices in $\mathbb{R}^{N_x \times N_y}$. We emphasize that the multiplication used here is the conventional matrix product.
For $\Ab=\{a_{k,l}\}_{k,l}\in \RR^{N_x\times N_x}$, the discretized version of $\soft_2(\Ab)$ is given as
\begin{align}
	\left(\soft_{2,{\rm dis}}(\Ab)\right)_{k,l}=\frac{\exp(a_{k,l})}{ \sum_{l=1}^{N_x} \exp(a_{k,l})}. 
\end{align}
Then the updating formula (\ref{eq.split.attention.tdis}) for $u^{1/M}$ in the discrete setting is
\begin{align}
	\ub^{1/M}=\ub^0+ \soft_{2,{\rm dis}}\left(\Qb^0(\ub^0)(\Kb^0(\ub^0))^{\top}\right)\Vb^0(\ub^0).
	\label{eq.attention.dis}
\end{align}
We emphasis again that this is exactly the attention layer in \cite{vaswani2017attention}.  This is just a compact mathematical expression for it.  All the matrices $\Wb^{Q,0}, \Vb^{Q,0}, \Kb^{Q,0}$ will be learned in the training process. 

\subsubsection{Computing the Discrete Solutions  $u^{2/M}$ and $u^{1}$ in (\ref{eq.ln.gen})}
In Theorem \ref{thm.ln}, for any $\vb=\{v_{k,l}\}_{k,l}\in \RR^{N_x\times N_y}$, we discretize $\alpha(\xb)$ and $\beta(\xb)$ as
\begin{align}
	\alpha_k(\vb)=\frac{1}{L_y}\sum_{l=1}^{N_y} v_{k,l}, \ \beta_{k}(\vb)=\frac{1}{L_y}\sum_{l}(v_{k,l}-\alpha_k(\vb))^2
\end{align}
for $k=1,...,N_x$.
Then we compute the discrete solutions  $u^{2/M}$ and $u^{1}$ as
\begin{align}
	&u^{2/M}_{k,l}=\sigma_2^0\frac{u^{1/M}_{k,l}-\alpha_k(\ub^{1/M})}{\sqrt{\beta_k(\ub^{1/M})}} + \sigma_1^0, 
	\label{eq.ln1.dis}\\
	&u^{1}_{k,l}=\sigma_2^0\frac{u^{(3+J)/M}_{k,l}-\alpha_k(\ub^{(3+J)/M})}{\sqrt{\beta_k(\ub^{(3+J)/M})}} + \sigma_1^0, 
	\label{eq.ln2.dis}
\end{align}
for $k=1,...,N_x$ and $l=1,...,N_y$.

\subsubsection{Computing the Discrete Solutions  $u^{(3+J)/M}$ in (\ref{eq.split.skip2})}
For $u^{(3+J)/M}$, its discrete analogue is computed as
\begin{align}
	\ub^{(3+J)/M}= \frac{1}{2}(\ub^{(2+J)/M}+\ub^{2/M}).
	\label{eq.skip2.dis}
\end{align}
\subsubsection{Computing the Discrete Solutions  $u^{(2+j)/M}$ in (\ref{eq.split.fnn.seq})}
In (\ref{eq.split.fnn.seq}), we first compute
\begin{align}
	\bar{\ub}^{(2+j)/M}= \ub^{(1+j)/M} +  \ub^{(1+j)/M}\Wb_j^0+\bb^0_j,
	\label{eq.ffn.linear.dis}
\end{align}
and then update
\begin{align}
	\ub^{(2+j)/M}=\ReLU(\bar{\ub}^{(2+j)/M}).
	\label{eq.ffn.relu.dis}
\end{align}
The structure of the discretized scheme (\ref{eq.split.attention})--(\ref{eq.split.normal2}) is illustrated in Figure~\ref{fig.architecture}(b), and it exactly recovers the well-known Transformer architecture introduced in \cite{vaswani2017attention}, see the next section for detailed explanations.

\section{Mathematical Explanations of  Transformers}
\label{sec.connections}
\subsection{Connections to Transformer Encoder in \cite{vaswani2017attention}}
Transformer architecture was first proposed in \cite{vaswani2017attention} for natural language processing. 
After time and spatial discretization, our splitting scheme recovers the well-known Transformer with single-head attention in \cite{vaswani2017attention}. We supply the details of the explanations in the following. 
Let us  first show that one iteration $\ub^0\rightarrow \ub^{1}$ realizes one Transformer block:
\begin{itemize}
	\item Input $\ub^0$: With our discretization, $\ub^0\in \RR^{N_x\times N_y}$. It is an input with $N_x$ tokens. Each token is represented as a vector with $N_y$ numbers.
	
	\item Scaled dot-product attention: Equation (\ref{eq.attention.dis}) is a single head attention  with a skip connection. 
	The second term in the right-hand side of (\ref{eq.attention.dis}) is a single-head attention with query matrix $\Qb^0$, key matrix $\Kb^0$ and value matrix $\Vb^0$, whose weight matrices are $\Wb^{Q,0},\Wb^{K,0}$ and $\Wb^{V,0}$, respectively. It recovers the single-head attention in \cite{vaswani2017attention}. The first term in the right-hand side is a skip connection, adding together the input and output of attention.
	
	\item Layer normalization: Equation (\ref{eq.ln1.dis})--(\ref{eq.ln2.dis}) perform layer normalization. For each token, (\ref{eq.ln1.dis})--(\ref{eq.ln2.dis}) normalize the input along the embedding direction to have mean $\sigma_1$ and variance $\sigma_2^2$.
	
	\item Position-wise feedforward network: Equation (\ref{eq.ffn.linear.dis})--(\ref{eq.ffn.relu.dis}) realizes a feedforward network with $J$ layers activated by ReLU. In particular, (\ref{eq.ffn.linear.dis}) can be rewritten as
	\begin{align*}
		\bar{\ub}^{(2+j)/M}=\ub^{(1+j)/M}(\Ib+\Wb_j^0)+\bb^0_j,
	\end{align*}
	where $\Ib$ denotes the identity matrix.
	It is a linear layer with weight matrix $(\Ib+\Wb_j^0)$ and bias $\bb^0_j$.
	Set $J=2$. Equation (\ref{eq.ffn.linear.dis})--(\ref{eq.ffn.relu.dis}) recovers the feedforward network in \cite{vaswani2017attention}.
	
	\item Skip connection: Equation (\ref{eq.skip2.dis}) is a relaxation step in the splitting scheme. It realizes the skip connection by averaging the input and output of the feedforward network.
\end{itemize}

The discussion above shows that one time step of the splitting scheme supplied in Section \ref{sec.timedis}  realizes the Transformer of \cite{vaswani2017attention}. In Section \ref{sec.timedis}, the time domain is discretized into $N_t$ time steps. Thus, the whole operator-splitting scheme is equivalent to compositions of $N_t$ Transformer blocks. Since the learnable variables $\theta^n=\{\Wb^{Q,n},\Wb^{K,n},\Wb^{V,n},\{\Wb_j^n,\bb_j^n\}_{j=1}^J\}$ depends on time, they are different for different time steps. Setting $N_t=6$ recovers the architecture in \cite{vaswani2017attention}. Furthermore, solving (\ref{eq.loss.dis}) for the control variables is equivalent to training the network.

\subsection{Connections to ViT in \cite{dosovitskiyimage}}
\label{sec.vit}
Based on the Transformer encoder in \cite{vaswani2017attention}, vision Transformer (ViT) was proposed in \cite{dosovitskiyimage} for image classification. In ViT, each image is cropped into patches, each of which is taken as a token. ViT has a similar architecture as the Transformer encoder in \cite{vaswani2017attention}, except that it has an additional input embedding layer with learnable parameters, and a linear layer at the end for output. 

In order to use (\ref{eq.split.attention})--(\ref{eq.split.normal2}) to provide a mathematical explanation of ViT, we need to incorporate the additional embedding layer and last linear layer with our algorithm. It can be easily done by introducing data-driven data pre-processing and post-processing steps. For the additional input embedding, we design a pre-processing step to generate the initial condition $f$. Consider the discrete setting. Denote $\Rb\in \RR^{(N_x-1)\times D}$ as the collection of $(N_x-1)$ patches of an image, where each row of $\Rb$ corresponds to one flattened patch of dimension $D$. Let $\Eb\in \RR^{D\times N_y}$ be an embedding matrix and $\vb_0\in \RR^{1\times N_y}$ be a vector where both are learnable. The vector $\vb_0$ represents the extra learnable embedding. We generate $\ub_0$ as
\begin{align}
	\ub_0=F(\Rb)=\begin{bmatrix}
		\vb_0\\ \Rb \Eb
	\end{bmatrix}.
\end{align}
The last linear layer in ViT can be taken as a data post-processing. For any $\ub\in \RR^{N_x\times N_y}$ and weight matrix $\Bb\in \RR^{N_y\times d}$, define
\begin{align}
	G(\Bb;\ub)=\begin{bmatrix}
		1 & \mathbf{0}_{1\times (N_x-1)}
	\end{bmatrix}\ub \Bb,
\end{align}
where $\mathbf{0}_{1\times (N_x-1)}$ is a zero vector of size $(N_x-1)$. Let $\bar{\theta}, \bar{\cN}$ be defined as in Section \ref{sec.learning.timedis}. We denote
$$
\widetilde{\theta}=\{ \bar{\theta},\vb_0,\Eb,\Bb\}
$$
as the collection of all learnable parameters. With the data pre- and post-processing, we define the whole operation 
\begin{align}
	\bar{\cN}^{\rm ViT}_{\widetilde{\theta}}=G\circ \bar{\cN}\circ F.
\end{align}
One can readily check that $\bar{\cN}^{\rm ViT}_{\widetilde{\theta}}$ recovers the ViT in \cite{dosovitskiyimage}.
Given a dataset $\{(\ub_i,\vb_i)\}_{i=1}^B$, we learn all parameters by solving
\begin{align}
	\min_{\widetilde{\theta}} \frac{1}{B}\sum_{i=1}^B\ell(\bar{\cN}^{\rm ViT}_{\widetilde{\theta}}(\ub_i),\vb_i),
	\label{eq.loss.dis.vit}
\end{align}
which is equivalent to training a ViT.

\section{Multi-Head Attention}
\label{sec.multihead}
As discussed in Section \ref{sec.connections}, the proposed splitting scheme for problem (\ref{eq.control}) is the Transformer architecture with one-head attention. In this section, we extend (\ref{eq.control}) and splitting scheme (\ref{eq.split.attention})--(\ref{eq.split.normal2}) so that it realizes multi-head attention. 

\subsection{Control Problem with Continuous Head Dimension}
We first consider a continuous-head setting. After discretizations, we will show that it recovers multi-head attention. In the continuous-head setting, we take head as an additional dimension. To accommodate the new dimension, we update the definition of $W^{Q},W^{K},W^{V}$ to $W^{Q}(\yb,\yt,h,t),W^{K}(\yb, \yt,h,t),W^{V}(\yb,\yt,h,t)$ on $\Omega_y\times\Omega_y\times \Omega_h\times [0,+\infty)$, where $h$ is the variable for the head dimension with domain $\Omega_h$. Similar to (\ref{eq.atten.trans1})--(\ref{eq.atten.trans3}), we define
\begin{align}
	&Q(\xb,\yb,h,t;u)=\mylangle W^{Q}(\cdot,\yb,h,t),\ u(\xb,\cdot,t)\myrangle_{\Omega_y},\nonumber\\
	&K(\xb,\yb,h,t;u)=\mylangle W^{K}(\cdot,\yb,h,t),\ u(\xb,\cdot,t)\myrangle_{\Omega_y}, \nonumber\\
	&V(\xb,\yb,h,t;u )=\mylangle W^{V}(\cdot,\yb,h,t),\ u(\xb,\cdot,t)\myrangle_{\Omega_y}.
\end{align}
We update $\gamma(\xb,\widetilde{\xb},t;u )$ in (\ref{eq.scale.soft}) to
\begin{align}
	\gamma(\xb,\widetilde{\xb},h,t;u )=\text{Softmax}_{2}\left(\frac{1}{\sqrt{|\Omega_y|}}\mylangle Q(\xb,\cdot,h,t;u  ),K(\widetilde{\xb},\cdot,h,t;u )\myrangle_{\Omega_y}\right).
	\label{eq.scale.soft.mh}
\end{align}
The control problem (\ref{eq.control}) is updated to
\begin{align}
	\begin{cases}
		u_t=\underbrace{\int_{\Omega_h}\mylangle\gamma(\xb,\cdot,h,t;u),\ V(\cdot,\yb,h,t; u )\myrangle_{\Omega_x} dh}_{\mbox{I-a: attention} } + \underbrace{\partial I_{S_{1}(\sigma_1,\sigma_2)}(u)}_{\mbox{II: layer normalization}} \\
		\hspace{1cm}+ \underbrace{\sum_{j=1}^{J}\left(\mylangle W_j(\cdot,\yb,t),u(\xb,\cdot,t) \myrangle_{\Omega_y}+ b_j(\xb,t)\right)
			+ \partial I_{S_2}(u)}_{\mbox{III: fully connected network}} \mbox{ for } t\in(0,T],\\
		u(\xb,\yb,0)=f(\xb,\yb).
	\end{cases}
	\label{eq.control.mh}
\end{align}
Compared to (\ref{eq.control}), problem (\ref{eq.control.mh}) replaces I by I-a, which introduces an additional integral with respect to $h$.

\subsection{Time Discretizations Through Operator-Splitting Method}
The time discretizations and operator-splitting scheme for problem (\ref{eq.control.mh}) are the same as those for problem (\ref{eq.control}) except that we update Substep 1 to
\begin{align}
	&u^{1/M}-u^{0}= \int_{\Omega_h}\mylangle \gamma^0(\xb,\cdot,h;u^{0}),V^0(\cdot,\yb,h;u^{0})\myrangle_{\Omega_x} dh,
	\label{eq.split.attention.mh}
\end{align}
whose solution is given as
\begin{align}
	&u^{1/M}= \nonumber\\
	&u^{0}+ \int_{\Omega_h}\int_{\Omega_{x}}\soft_{2}\left( \frac{1}{\sqrt{|\Omega_y|}}\mylangle Q^0(\xb,\cdot,h;u^{0}),K^0(\eta,\cdot,h;u^{0};)\myrangle_{\Omega_y}\right) V^0(\eta,\yb,h;u^{0})d\eta dh.
	\label{eq.split.attention.mh.tdis}
\end{align}
\subsection{ Discrete Solutions  of $u^{1/M}$ in (\ref{eq.split.attention.mh.tdis})}
Suppose $\Omega_h=[0,L_h]$. In addition to the spatial discretization discussed in Section \ref{sec.space.dis}, we discretize the head domain into $N_h$ grids. For simplicity, we suppose $L_h=N_h$.

For function $W^{K,0}(\yb,\yt,h)$, we denote $W^{Q,0}_{k,l,m}=W^{Q,0}(k,l,m), \Wb^{Q,0}_{m}=\{W^{Q,0}_{k,l,m}\}_{k,l}\in \RR^{N_y\times N_y }$ and $\Wb^{Q,0}=\{\Wb^{Q,0}_{m}\}_{m}\in \RR^{N_y\times N_y \times N_h}$. We define $\Wb^{K,0},\Wb^{V,0}$ similarly. We denote the discretized integral transformation in the updated attention score as
\begin{align*}
	&\Qb^0_{m}(\ub^0)=\ub^0 \Wb^{Q,0}_m,\ \Kb^0_{m}(\ub^0)=\ub^0 \Wb^{K,0}_m, \ \Vb^n_{m}(\ub^0)=\ub^0 \Wb^{V,0}_m,
\end{align*}
and denote $\Qb^0(\ub)=\{\Qb^0_{m}(\ub^0)\}_{m}, \ \Kb^0(\ub^0)=\{\Kb^0_{m}(\ub^0)\}_{m}, \ \Vb^0(\ub^0)=\{\Vb^0_{m}(\ub^0)\}_{m}\in \RR^{N_x\times N_y \times N_h}$.
The inner product $\mylangle Q^0(\xb,\cdot,h;u^0  ),K^0(\widetilde{\xb},\cdot,h;u^0 )\myrangle_{\Omega_y}$ is discretized as
\begin{align}
	\mylangle \Qb^0_m(\ub^0),\Kb^0_m(\ub^0)\myrangle_{\bar{\Omega}_y}=\Qb^0_m(\ub^0)(\Kb^0_m(\ub^0))^{\top}.
\end{align}
Then the updating formula (\ref{eq.split.attention.mh.tdis}) for $u^{1/M}$ is discretized as
\begin{align}
	\ub^{1/M}=
	\ub^0+ \sum_{m=1}^{N_h}\soft_{2,{\rm dis}}\left(\frac{1}{\sqrt{L_x}} \Qb^0_m(\ub^0)(\Kb^0_m(\ub^0))^{\top}\right) \Vb^0_m(\ub^0).
	\label{eq.attention.mh.dis}
\end{align}

\subsection{Connections to Transformers}
Similar to the discussion in Section \ref{sec.connections}, with (\ref{eq.attention.mh.dis}), the operator-splitting scheme for the control problem (\ref{eq.control.mh}) realizes $N_t$ times encoder in \cite{vaswani2017attention} with multi-head attention. To show this, we only need to demonstrate the equivalence between (\ref{eq.attention.mh.dis}) and the multi-head attention.

The second term in the right-hand side of (\ref{eq.attention.mh.dis}) sums over the embedding dimension and head dimension. It is a multi-head attention with $N_h$ heads, query matrix $\Qb^0$, key matrix $\Kb^0$ and value matrix $\Vb^0$, whose weight matrices are $\Wb^{Q,0},\Wb^{K,0}$ and $\Wb^{V,0}$, respectively.

\section{Convolution Transformers for Video and Image Problems}
\label{sec.convolution}

Image and video data inherently possess rich local spatial structures, which have been effectively exploited by convolutional neural networks (CNNs) \cite{ronneberger2015u}. 
Convolutions serve as a powerful mechanism for capturing spatially localized patterns, making them especially well-suited for data defined on Cartesian grids.

In contrast, Transformer architectures were originally developed for language processing, where the data—represented by functions such as \( u(\xb, \yb, t) \)—often lack explicit spatial structure, as in the case of sentences or token sequences. For such unstructured data, attention-based mechanisms leveraging general integral transforms offer a flexible and effective modeling tool.

However, when applying Transformers to structured data such as images, videos, or other signals on regular grids, it has been shown that convolutional operations are significantly more efficient for feature extraction. This is due to their locality, weight sharing, and computational efficiency. Notably, convolutions can be interpreted as a special class of integral transformations with translation-invariant kernels and localized support.

Thanks to the integral formulation of the \( Q \), \( K \), and \( V \) operators in our framework (Sections~\ref{sec.model}--\ref{sec.dis}), it becomes straightforward to specialize these operators to convolutions. This allows our model to seamlessly incorporate convolutional structures, thereby leveraging domain-specific inductive biases and enhancing computational performance when applied to grid-structured data.

\subsection{Convolution Transformers}
\label{sec.convTrans}
We discuss the case for grayscale images. The framework can be extended to color images and videos by introducing additional variables to $u$.

We consider functions of interest $u(x,\yb,t)$ where $x\in \Omega_x\subset \RR $ is the index of tokens (image patches), $\yb \in \Omega_y\subset \RR^2$ are variables for spatial domain (indices of pixel location), and $t$ is the time. For each token and any fixed time $t$, $u$ is a two-dimensional image. 

In this setting, we define $W^Q(\yb,t),W^K(\yb ,t),W^V(\yb,t)\in \Omega_y\times [0,+\infty) $ as convolution kernels. Operations $Q,K,V$ are defined by covolution as
\begin{align}
	&Q(x,\yb,t;u)=W^{Q}(\cdot,t)*u(x,\cdot,t)=\int_{\Omega_y}W^{Q}(\xxi,t)u(x, \yb -\xxi,t)d\xxi ,\nonumber\\
	&K(x,\yb,t;u)=W^{K}(\cdot,t)*u(x,\cdot,t)=\int_{\Omega_y}W^{K}(\xxi,t)u(x,\yb-\xxi, t)d\xxi,\nonumber\\
	&V(x,\yb,t;u)=W^{V}(\cdot,t)*u(x,\cdot,t)=\int_{\Omega_y}W^{V}(\xxi ,t)u(xb,\yb-\xxi,t)d\xxi .
	\label{eq.conv.atten.trans}
\end{align}

The score function is computed as
\begin{align}
	\gamma(x,\widetilde{x},t ;u)=\text{Softmax}_{2}\left(\frac{1}{\sqrt{|\Omega_y|}}\int_{\Omega_y} Q(x,\xxi,t;u  )K(\widetilde{x},\xxi,t;u )d\xxi\right),
	\label{eq.conv.scale.soft}
\end{align}
and the output of the attention layer is then given by: $\mylangle \gamma(x,\cdot,t;u),V(\cdot,\yb,t;u)\myrangle_{\Omega_x}.$

With the modification above, scheme (\ref{eq.split.attention})--(\ref{eq.split.fnn}) gives rise to convolutional Transformer. Solvers discussed in Section \ref{sec.sol-to-sub} can still be used to solve subproblems in the new scheme.

\subsection{Connections to CvT in \cite{wu2021cvt}}
Convolutional vision Transformer (CvT) \cite{wu2021cvt} incorporates Vit with convolutional neural networks (CNN), which utilizes both image local spatial structures (by CNN) and global contexture (by Transformer). CvT consists of two components, convolutional token embedding and convolutional Transformer block. The convolution Transformer discussed in Section  \ref{sec.convTrans} realizes a one-head CvT with one stage and without convolutional token embedding.

In fact, the convolutional token embedding is an additional convolution layer. Our operator-splitting scheme can be easily revised to accomodate this layer.
Let $W^C(\xb,\yb,t)$ be a convolution kernel and $b^C(\xb,t)$ be a bias term.  We consider the following control problem
\begin{align}
	\begin{cases}
		u_t=\underbrace{\mylangle \gamma(\xb,\cdot,t;u),\ V(\cdot,\yb,t;u)\myrangle_{\Omega_x} }_{\mbox{I: attention} } + \underbrace{\partial I_{S_{1}(\sigma_1(t),\sigma_2(t))}(u)}_{\mbox{II: layer norml.}} + \underbrace{W^C(\cdot,t)*u(\xb,\cdot,t)+b^C(\xb,t)}_{\mbox{IV: conv. token embedding}}\\
		\hspace{1cm}+ \underbrace{\sum_{j=1}^{J}\left(\mylangle W_j(\cdot,\yb,t), \ u(\xb,\cdot,t)\myrangle_{\Omega_y}+ b_j(\xb,t)\right)
			+ \partial I_{S_2}(u)}_{\mbox{III: fully connected network}}, \quad t\in (0,T], \\
		u(\xb,\yb,0)=f(\xb,\yb),
	\end{cases}
	\label{eq.control.cvt}
\end{align}
where the score function $\gamma$ is computed as discussed in Section \ref{sec.convTrans}. For the operator-splitting scheme (\ref{eq.split.attention})--(\ref{eq.split.fnn}), we add two additional substeps before (\ref{eq.split.attention}). Given $u^0$, we compute $u^1$ by $M=6+J$ substeps:

\noindent {\bf Substep 1}:
Solve $\widetilde{u}^{1/M}$ from
\begin{align}
	&u^{1/M}-u^{0}= W^{C,0}(\cdot)*u^0(\xb,\cdot)+b^{C,0}(\xb).
	\label{eq.split.cte}
\end{align}

\noindent {\bf Substep 2}:  Compute $u^{2/M}$ from
\begin{align}
	u^{2/M}-u^{1/M}=\partial I_{S_1(\sigma_1^0,\sigma_2^0)}(u^{2/M}).
	\label{eq.split.normal.cte}
\end{align}
The remaining intermediate variables $u^{3/M},...,u^1$ can be computed via (\ref{eq.split.attention})--(\ref{eq.split.fnn}), respectively. In spatial discretization, $N_x$ corresponds to the number of tokens, i.e., the number of convolution channels in CvT. Equipped with data pre- and post-processing as discussed in Section \ref{sec.vit}, the discretized new operator-splitting scheme recovers a one-head one-stage CvT. 

For a $K$-stage CvT, the number of tokens changes from stage to stage. This property can be realized by incorporating the new operator-splitting scheme with a hybrid splitting scheme \cite{tai2024pottsmgnet}. Specifically, we need to define $K$ convolution kernels and a bias for convolutional token embedding. Based on the multigrid idea, these kernels are discretized at different scales, corresponding to various numbers of tokens.

\section{Conclusion}
\label{sec.conclusion}
In this paper, we have introduced a novel operator-theoretic framework that interprets the Transformer architecture as a discretized operator-splitting scheme for a continuous integro-differential equation. Within this formulation, core components of the Transformer—such as self-attention, layer normalization, and feedforward networks—are rigorously derived as substeps in a structured variational process and operator-splitting techniques. This approach establishes a unified mathematical foundation that applies broadly to Transformer-based models, including the original Transformer \cite{vaswani2017attention}, Vision Transformer (ViT) \cite{dosovitskiyimage}, and Convolutional vision Transformer (CvT) \cite{wu2021cvt}. By bridging deep learning architectures with continuous mathematical modeling, our framework advances theoretical understanding and offers a principled pathway for the design, analysis, and control of future neural architectures. 

Our continuous formulation offers a principled basis for deriving new architectures through alternative numerical discretization schemes, potentially leading to models with improved stability. It also enables the use of PDE theory to analyze Transformer properties such as stability, approximation power, and convergence. Furthermore, framing the network as an optimal control problem links deep learning to control theory, which may inspire new optimization algorithms and insights into training dynamics.

Our framework currently considers ReLU activation in the feedforward layer and only focuses on the transformer block of \cite{vaswani2017attention}. Following this work, possible future  directions include generalizing this framework to incorporate general activation functions, incorporation of discrete inductive biases such as positional encodings into the continuous formulation, and a more rigorous analysis of the well-posedness and regularity of the underlying integro-differential equation.

We believe the continuous operator perspective introduced here not only deepens the theoretical understanding of Transformers but also provides a flexible and principled foundation for the next generation of attention-based models. By continuing to explore the intersection of continuous dynamical systems, operator splitting, and deep learning, we hope to further bridge the gap between applied architecture design and rigorous mathematical analysis.

\appendix
\section*{Appendix}
\section{Explanation of the variables for Transformers}
\label{appendix:A}
The input to the Transformer model is usually a sequence. In the context of language processing, the input is usually sentences. Transformers do not process raw text directly. Instead, they break sentences into tokens. A token can be a word (e.g., "hello"), a subword (e.g., "multi" + "grid" = "multigrid"), or even a single character.
Then, each different token is transformed into unique embedding vectors through an embedding layer.
The way text is tokenized affects how efficiently a model understands language.

For example, we have an input sentence "Life is like a boat", and we treat each word as a token. The embedding layer would transform each token into a $512$-dimensional vector. Then, the input sentence is represented by a matrix $U$ of size $5\times 512$:
$$\underset{\text{Input sentence}}{\underbrace{\text{Life is like a boat}}}\rightarrow\underset{\text{Tokens}}{\underbrace{[\text{"Life","is","like","a","boat"}]}}\rightarrow
\underset{\text{Embedded matrix}}{\underbrace{U=\begin{bmatrix}
			\text{Embedding("Life")} \\
			\text{Embedding("is")} \\
			\text{Embedding("like")} \\
			\text{Embedding("a")} \\
			\text{Embedding("boat")}
\end{bmatrix}}}.
$$
In general, a sentence of $N_x$ tokens with $N_y$-dimensional embedding would be transformed into a matrix $U$ of size $N_x\times N_y$. This matrix $U$ can be viewed as the discretization of a continuous function $u(\mathbf x,\mathbf y, t):\Omega_x\times\Omega_y\rightarrow\mathbb R\times [0,T)$, where $\Omega_x$ and $\Omega_y$ are continuous domains, on a structured grid of size $N_x\times N_y$. Here, the variable $\mathbf x$ indicates the position of tokens,  $\mathbf y$ indicates the entry of token vectors, and $t$ indicates the time variable. The embedded matrix $U$ is fed into the Transformer model and updated by attention layers.

\section{Proof of Theorem \ref{thm.ln}}
\label{sec.thm.ln.proof}
\begin{proof}[Proof of Theorem \ref{thm.ln}]
	Problem (\ref{eq.ln.min}) can be rewritten as
	\begin{align}
		\begin{cases}
			u(\xb,\yb)=\argmin_{\bar{u}} \frac{1}{2} \|\bar{u}-v\|_{\Omega_y}^2,\\
			\frac{1}{|\Omega_y|}\int_{\Omega_y} u(\xb,\xi)d\xi=\sigma_1,\\
			\frac{1}{|\Omega_y|}\int_{\Omega_y} (u(\xb,\xi)-\sigma_1)^2d\xi=\sigma_2^2.
		\end{cases}
	\end{align}
	Introducing Lagrange multipliers $\lambda_1,\lambda_2$, the minimizers for the above problem satisfy:
	\begin{align}
		\begin{cases}
			(u-v)+\frac{\lambda_1}{|\Omega_y|}+ \frac{2\lambda_2}{|\Omega_y|}(u-\sigma_1)=0,\\
			\frac{1}{|\Omega_y|}\int_{\Omega_y} u(\xb,\yb)d\yb=\sigma_1,\\
			\frac{1}{|\Omega_y|}\int_{\Omega_y} (u(\xb,\yb)-\sigma_1)^2d\yb=\sigma_2^2.
		\end{cases}
		\label{eq.ln.proof.lagrange}
	\end{align}
	Integrate the first equation over $\yb$ gives rise to
	\begin{align}
		0= \int_{\Omega_y} u(\xb,\xi)d\xi -\int_{\Omega_y} v(\xb,\xi)d\xi + \lambda_1 
		= |\Omega_y|\sigma_1 -\int_{\Omega_y} v(\xb,\xi)d\xi+\lambda_1,
	\end{align}
	implying that 
	\begin{align}
		\lambda_1=\int_{\Omega_y} v(\xb,\xi)d\xi- |\Omega_y|\sigma_1 .
		\label{eq.ln.proof.lam1}
	\end{align}
	Multiplying the first equation in (\ref{eq.ln.proof.lagrange}) by $(u-\sigma_1)$ and then integrating over $\yb$ give rise to
	\begin{align}
		\int_{\Omega_y} (u(\xb,\xi)-v(\xb,\xi))(u(\xb,\xi)-\sigma_1)d\xi +2\lambda_2\sigma_2^2=0,
	\end{align}
	from which we deduce
	\begin{align}
		\int_{\Omega_y} (u^2(\xb,\xi)-u(\xb,\xi)\sigma_1)d\xi- \int_{\Omega_y} v(\xb,\xi)(u(\xb,\xi)-\sigma_1) d\xi+ 2\lambda_2\sigma_2^2=0.
		\label{eq.ln.proof.lam2.1}
	\end{align}
	Note that
	\begin{multline}
		\int_{\Omega_y} (u^2(\xb,\xi)-u(\xb,\xi)\sigma_1)d\xi= \int_{\Omega_y} (u^2(\xb,\xi)-2u(\xb,\xi)\sigma_1 + \sigma_1^2)d\xi 
		\\
		= \int_{\Omega_y} (u(\xb,\xi)-\sigma_1)^2d\xi 
		= |\Omega_y|\sigma_2^2.
		\label{eq.ln.proof.meanvar}
	\end{multline}
	Substituting (\ref{eq.ln.proof.meanvar}) into (\ref{eq.ln.proof.lam2.1}) gives 
	\begin{align}
		\lambda_2=-\frac{|\Omega_y|}{2}+\frac{\int_{\Omega_y} v(\xb,\xi)(u(\xb,\xi)-\sigma_1) d\xi}{2\sigma_2^2}.
		\label{eq.ln.proof.lam2}
	\end{align}
	Substituting (\ref{eq.ln.proof.lam1}) and (\ref{eq.ln.proof.lam2}) into the first equation of (\ref{eq.ln.proof.lagrange}), we get
	\begin{align}
		0=&(u-v)+\frac{\lambda_1}{|\Omega_y|}+ \frac{2\lambda_2}{|\Omega_y|}(u-\sigma_1) \nonumber\\
		=&(u-v)+\frac{1}{|\Omega_y|}\int_{\Omega_y} v(\xb,\xi)d\xi- \sigma_1+ \left(-1+\frac{\int_{\Omega_y} v(\xb,\xi)(u(\xb,\xi)-\sigma_1) d\xi}{|\Omega_y|\sigma_2^2}\right)u \nonumber\\
		&- \left(-1+\frac{\int_{\Omega_y} v(\xb,\xi)(u(\xb,\xi)-\sigma_1) d\xi}{|\Omega_y|\sigma_2^2}\right)\sigma_1 \nonumber\\
		=&\frac{\int_{\Omega_y} v(\xb,\xi)(u(\xb,\xi)-\sigma_1) d\xi}{|\Omega_y|\sigma_2^2}(u-\sigma_1)-v+\frac{1}{|\Omega_y|}\int_{\Omega_y} v(\xb,\xi)d\xi.
	\end{align}
	We have
	\begin{align}
		\frac{\int_{\Omega_y} v(\xb,\xi)(u(\xb,\xi)-\sigma_1) d\xi}{|\Omega_y|\sigma_2^2}(u-\sigma_1)=v-\frac{1}{|\Omega_y|}\int_{\Omega_y} v(\xb,\xi)d\xi.
		\label{eq.ln.proof.ueq}
	\end{align}
	Define $\alpha(\xb),\beta(\xb)$ as in Theorem \ref{thm.ln}. We claim that the solution to  (\ref{eq.ln.proof.ueq}) is given as
	\begin{align}
		u=\frac{v-\alpha}{\sqrt{\beta}}\sigma_2+\sigma_1.
		\label{eq.ln.proof.uexpr}
	\end{align}
	To verify it, note that (\ref{eq.ln.proof.uexpr}) can be rewritten as
	\begin{align}
		\frac{u-\sigma_1}{\sigma_2}=\frac{v-\alpha}{\sqrt{\beta}}.
		\label{eq.ln.proof.uexpr2}
	\end{align}
	Utilizing (\ref{eq.ln.proof.uexpr2}), we deduce
	\begin{align}
		&\frac{\int_{\Omega_y} v(\xb,\xi)(u(\xb,\xi)-\sigma_1) d\xi}{|\Omega_y|\sigma_2^2}(u-\sigma_1) \nonumber\\
		=& \frac{1}{|\Omega_y|}\left(\int_{\Omega_y} v(\xb,\xi)\frac{(u(\xb,\xi)-\sigma_1)}{\sigma_2} d\xi\right) \frac{u-\sigma_1}{\sigma_2}  \nonumber\\
		=&\frac{1}{|\Omega_y|}\left(\int_{\Omega_y} v(\xb,\xi)\frac{(v(\xb,\xi)-\sigma_1)}{\sqrt{\beta}} d\xi\right) \frac{v-\sigma_1}{\sqrt{\beta}}  \nonumber\\
		=&\frac{1}{|\Omega_y|\beta}\left(\int_{\Omega_y} [v^2(\xb,\xi)-2\sigma_1v(\xb,\xi)+\sigma_1^2]d\xi\right) (v-\sigma_1) \nonumber\\
		=&\frac{1}{|\Omega_y|\beta}\left(\int_{\Omega_y} (v(\xb,\xi)-\sigma_1)^2d\xi\right) (v-\sigma_1) \nonumber\\
		=&\frac{1}{|\Omega_y|\beta}(|\Omega_y|\beta)(v-\sigma_1) \nonumber\\
		=&v-\frac{1}{|\Omega_y|}\int_{\Omega_y} v(\xb,\xi) d\xi,
	\end{align}
	where the second equality follows from (\ref{eq.ln.proof.uexpr2}). The derivation above verifies that (\ref{eq.ln.proof.uexpr}) is the solution to (\ref{eq.ln.proof.ueq}).
\end{proof}

\section{An Introduction to Operator Splitting Schemes}
\label{appendix:c} 

Operator-splitting methods decompose a complicated time-evolution problem into several substeps so that each substep can be solved efficiently. Consider the evolution equation
\begin{align}
	\begin{cases}
		u_t+\sum_{k=1}^K A_k(t;u)=0 \mbox{ in } \Omega\times(0,T],\\
		u(0)=u_0,
	\end{cases}
\end{align}
where $A_k(t;u)$'s are operators applied on $u$. Denote $t^n=n\Delta t$.
Given $u^n$, there are three types of splitting strategies to compute $u^{n+1}$.

Sequential Lie scheme  \cite[Sec. 2.2]{glowinski2016some}  decomposes this computation into $K$ sequential substeps. Given $u^n$, one computes $u^{n+1}$ as follows:\\
For $k=1,...,K$, solve
\begin{align}
	\begin{cases}
		u_t+A_k(t;u)=0 \mbox{ in } \Omega\times (t^n,t^{n+1}],\\
		u(t^n)=u^{n+(k-1)/K},
	\end{cases}
	\label{eq.splitting.seq}
\end{align}
and set $u^{n+k/K}=u(t^{n+1})$.

Parallel splitting \cite{lu1992parallel} decomposes the computation into $K$ parallel substeps:\\
For $k=1,...,K$, compute
\begin{align}
	\begin{cases}
		v_t+KA_k(t;v)=0 \mbox{ in } \Omega\times (t^n,t^{n+1}],\\
		v(t^n)=u^n,
	\end{cases}
	\label{eq.splitting.paral}
\end{align}
and set $v_k=u(t^{n+1})$. Then compute
$$
u^{n+1}=\frac{1}{K}\sum_{k=1}^K v_k.
$$

Hybrid splitting \cite{tai2024pottsmgnet} is a combination of sequential and parallel splitting, which decomposes the computation into sequential substeps, each of which contains several parallel substeps. 

Substep (\ref{eq.splitting.seq}) and (\ref{eq.splitting.paral}) can be solved by one-step explicit or implicit scheme. For example, (\ref{eq.splitting.seq}) can be solved by 
\begin{align*}
	\frac{u^{n+k/K}-u^{n+(k-1)/K}}{\Delta t}+A_k(u^{n+k/K})=0 \ 
	\mbox{   or   } \
	\frac{u^{n+k/K}-u^{n+(k-1)/K}}{\Delta t}+A_k(u^{n+(k-1)/K})=0.
\end{align*}
Our scheme in Section \ref{sec.timedis} is a sequential splitting scheme, in which (\ref{eq.split.attention}) uses explicit scheme, and (\ref{eq.split.normal1}) and (\ref{eq.split.normal2}) use implicit scheme. Substep (\ref{eq.split.fnn}) is semi-implicit, which is slightly different from others. In fact, (\ref{eq.split.fnn}) is a composition of two substeps, as it is solved by another sequential splitting (\ref{eq.split.fnn.seq}). In (\ref{eq.split.fnn.seq}), the first substep is explicit and the second one is implicit.

It can be shown that with proper conditions, all three strategies are first-order accurate schemes for the evolution equation.

\bibliographystyle{abbrv}
\bibliography{ref}
\end{document}